\documentclass[11pt]{article}
\usepackage[margin=1.15in]{geometry}
\usepackage{amsmath,amssymb,amsthm,mathtools}
\usepackage[colorlinks=true,linkcolor=blue,citecolor=blue,urlcolor=blue]{hyperref}
\usepackage[most]{tcolorbox}
\tcbuselibrary{listings,breakable}
\usepackage{dirtytalk}

\usepackage{tikz}
\usetikzlibrary{arrows.meta,positioning}

\newtcblisting{promptbox}{
  breakable,
  listing only,
  colback=gray!3,
  colframe=black!60,
  boxrule=0.5pt,
  arc=1mm,
  left=6pt,
  right=6pt,
  top=6pt,
  bottom=6pt,
  title={Distilled prompt, GPT-5.5 Pro, June 2026},
  fonttitle=\bfseries\small,
  listing options={
    basicstyle=\ttfamily\tiny,
    breaklines=true,
    columns=fullflexible,
    keepspaces=true,
    showstringspaces=false,
    upquote=true
  }
}

\newtcblisting{removedhintbox}{
breakable,
listing only,
colback=red!2,
colframe=red!70!black,
boxrule=0.6pt,
arc=1mm,
left=6pt,
right=6pt,
top=6pt,
bottom=6pt,
title={Additional removed hints, GPT-5.5 Pro, June 2026},
fonttitle=\bfseries\small,
listing options={
basicstyle=\ttfamily\tiny,
breaklines=true,
columns=fullflexible,
keepspaces=true,
showstringspaces=false,
upquote=true
}
}

\newcommand{\X}{\mathcal X}
\newcommand{\F}{\mathcal F}
\newcommand{\C}{\mathcal C}
\newcommand{\E}{\mathbb E}
\newcommand{\Prb}{\mathbb P}
\newcommand{\1}{\mathbf 1}
\newcommand{\err}{\operatorname{err}}
\newcommand{\Maj}{\operatorname{Maj}}
\newcommand{\vc}{\operatorname{VC}}

\theoremstyle{plain}
\newtheorem{theorem}{Theorem}[section]
\newtheorem{proposition}[theorem]{Proposition}
\newtheorem{lemma}[theorem]{Lemma}

\theoremstyle{remark}

\title{Majority-of-Three is Optimal}

\author{
Divit Rawal\thanks{Department of Statistics, University of California, Berkeley. Email: \{\href{mailto:divit.rawal@berkeley.edu}{divit.rawal}, \href{mailto:zhivotovskiy@berkeley.edu}{zhivotovskiy}\}@berkeley.edu.}
\quad
Nikita Zhivotovskiy\footnotemark[2]
}

\date{June 11, 2026}

\begin{document}

\maketitle

\begin{abstract}
We give a short proof that the majority vote of three independent consistent classifiers is an optimal learner in the realizable PAC setting. This proves optimality for the simplest voting scheme, while simplifying both the algorithmic structure and the probabilistic analysis of previous voting learners, including the algorithm of Hanneke \cite{Hanneke2016} and the analysis of bagging by Larsen \cite{Larsen2023Bagging}. 
\end{abstract}

\section{Introduction}

Determining the optimal sample complexity of PAC learning in the realizable setting has been one of the central problems in statistical machine learning theory since the introduction of VC theory and the PAC model \cite{VapnikChervonenkis1971,Valiant1984,BlumerEhrenfeuchtHausslerWarmuth1989}.  The difficulty is twofold.  First, some 
sample consistent classifiers are known to be suboptimal, so one must identify a suitable algorithmic construction.  Second, even for natural candidate algorithms, the required probabilistic analysis has proved delicate.

A major milestone was Hanneke's optimal PAC learner \cite{Hanneke2016}, building on the construction of Simon \cite{Simon2015}.  This resolved the question up to universal constants, but the resulting algorithm and its analysis are quite involved.  Subsequent work has therefore sought simpler optimal learners or simpler proofs, including bagging \cite{Larsen2023Bagging} and constructions based on the one-inclusion graph algorithm of Haussler, Littlestone, and Warmuth \cite{HausslerLittlestoneWarmuth1994,AdenAliCherapanamjeriShettyZhivotovskiy2023OIGNotAlwaysOptimal, AdenAliCherapanamjeriShettyZhivotovskiy2023OptimalPACNoUC}. 

A particularly elementary candidate in this direction is the majority vote of three independently trained consistent classifiers: split the sample into three equal blocks, train one consistent classifier on each block, and classify by majority vote.  This majority-of-three rule was studied in \cite{AdenAliHogsgaardLarsenZhivotovskiy2024}, where the optimal in expectation bound was proved, together with a near optimal PAC bound (up to a doubly logarithmic factor $\log\log(\min\{\frac{n}{d}, \frac{1}{\delta}\})$, where $n$ is the sample size, $d$ is the VC dimension, and $\delta$ is the confidence parameter). The authors of \cite{AdenAliHogsgaardLarsenZhivotovskiy2024} conjectured that majority-of-three achieves the optimal PAC bound; here we provide a proof of that claim.

\section{Main result}

Let $\F\subseteq\{0,1\}^{\X}$ have VC dimension $d\ge 1$, let $P$ be a distribution on $\X$, and let $f^\star\in\F$ be the unknown target.  For a training sample $S=(X_1,\ldots,X_m)\sim P^m$, write $(S,f^\star(S))$ for the labeled sample and let $\widehat f_S$ be the output of a fixed deterministic consistent selector; thus $\widehat f_S\in\F$ and $\widehat f_S(X_i)=f^\star(X_i)$ for every sample point $X_i$.  For a classifier $f$, write
\[
\err_P(f)=\Prb_{X\sim P}\left[f(X)\ne f^\star(X)\right],
\qquad
\Maj(f_1,f_2,f_3)(x)=\1\{f_1(x)+f_2(x)+f_3(x)>3/2\}.
\]
For simplicity, we assume in what follows that $n = 3m$ for some positive integer $m$. Given a sample of size $n$, split it into three independent blocks $S_1,S_2,S_3$ of equal sizes $m = n/3$, and put $\widehat f_i=\widehat f_{S_i}$.  Importantly, by construction $\widehat f_1, \widehat f_2, \widehat f_3$ are independent. In what follows, we ignore routine measurability issues. 

\begin{theorem}\label{thm:main}
Assume that $n = 3m$ for some positive integer $m$. For every $\delta\in(0,1)$, every distribution $P$, and every $f^\star \in \F$,
\[
\Prb\left(\err_P\left(\Maj(\widehat f_1,\widehat f_2,\widehat f_3)\right)
\le 10^7\frac{d+\log(1/\delta)}{n}\right)\ge 1-\delta .
\]
This matches the classic lower bound \cite{ehrenfeucht1989general} up to a universal constant. 
\end{theorem}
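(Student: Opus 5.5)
The plan is to reduce the error of the majority vote to the mass of the points where at least two of the three classifiers err. Write $E_i=\{x:\widehat f_i(x)\ne f^\star(x)\}$. Since $\Maj(\widehat f_1,\widehat f_2,\widehat f_3)$ errs exactly on $\bigcup_{i<j}E_i\cap E_j$, we have
\[
\err_P(\Maj)\le \sum_{i<j}P(E_i\cap E_j).
\]
By symmetry it suffices to show $P(E_1\cap E_2)\le C\frac{d+\log(1/\delta)}{n}$ with probability at least $1-\delta/3$.

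\textbf{Step 1 (condition on the first classifier).} Condition on $S_1$, so $A=E_1$ is a fixed set with $p=P(A)$. By the classical uniform bound for consistent learners, with probability at least $1-\delta/6$ we have $p\le C_0\frac{d\log(n/d)+\log(1/\delta)}{n}$. The log factor here is the whole difficulty, so this bound alone is not enough.

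\textbf{Step 2 (localize the second classifier to $A$).} Given $A$, I would bound $P(E_2\cap A)=p\cdot P_A(E_2)$, where $P_A=P(\cdot\mid A)$. The class $\{\{f\ne f^\star\}\cap A: f\in\F\}$ still has VC dimension at most $d$. The number of points of $S_2$ that land in $A$ is $\mathrm{Bin}(m,p)$, which is concentrated near $mp$ when $mp\gtrsim \log(1/\delta)$. Conditioned on that count $k$, these points are i.i.d.\ from $P_A$ and $\widehat f_2$ is consistent on them. A uniform relative-deviation bound over the localized class then gives
\[
P_A(E_2)\lesssim \frac{d\log(k/d)+\log(1/\delta)}{k}.
\]
Multiplying, $P(E_1\cap E_2)\lesssim \frac{d\log(mp/d)+\log(1/\delta)}{m}$. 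Since $p$ is at most polylogarithmic over $n/d$, the quantity $\log(mp/d)$ is only $O(\log\log)$. This reproduces the known $\log\log$ bound, so one more idea is needed.

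\textbf{Step 3 (remove the residual log, the main obstacle).} I would try a peeling argument that uses both the independence and the symmetry of the blocks. Slice the region by the conditional error level: run the localization argument inside $A$ at dyadic scales $p_j=2^{-j}$ on $E_2$ as well, applied recursively. Alternatively, replace the uniform VC bound in Step 2 by a bound that depends on the disagreement-region mass, via a chaining or Hanneke-type bound. The point is that the logarithm $\log(k/d)$ only arises when the localized problem has many samples relative to $d$. In that regime, the localized class restricted to $A$ has small $P_A$-error with probability improving geometrically, with tail $e^{-k\varepsilon}$ times a covering number.

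Concretely, I would aim for a tail bound of the form
\[
\Prb\bigl(P(E_1\cap E_2)>t\bigr)\le \Bigl(\frac{C d}{nt}\Bigr)^{d}e^{-cnt}.
\]
I would try to prove it by integrating over the distribution of $p$: the tail of $p$ is $\Prb(p>s)\le (Cn s/d)^d e^{-cns}$. The product structure should then let the $(ns/d)^d$ factor from block 1 be cancelled by the $e^{-c\,m s\cdot \varepsilon}$ decay from block 2.

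Setting $t=C'(d+\log(1/\delta))/n$ then yields the theorem with an absolute constant, after which a union bound over the three pairs finishes the proof. Making this cancellation rigorous, that is, controlling the joint tail without losing $\log\log$, is the step I expect to be hardest.
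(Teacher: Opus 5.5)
Your reduction to the pairwise overlaps $P(E_i\cap E_j)$ is correct and is the paper's first step. Everything after it, however, stops short of a proof. Step~3 is a list of candidate strategies, and the one concrete mechanism you describe fails.

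\textbf{The posited tail for $p$ is false.} You assume $\Prb(p>s)\le (Cns/d)^d e^{-cns}$ for $p=P(E_1)$. Plugging in $s=K(d+\log(1/\delta))/n$ and using $d\log(1+x/d)\le x$, this would make every consistent selector achieve $O((d+\log(1/\delta))/n)$. That contradicts the fact, recalled in the introduction, that some consistent classifiers are suboptimal: for suitable classes a worst-case consistent selector has error of order $d\log(n/d)/n$. The correct bound carries a factor $(Cn/d)^d$ or $(C/s)^{2d}$, and that factor is the source of $\log(n/d)$.

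\textbf{The cancellation cannot occur as described.} Given $p=s$, the localized decay for $P_{E_1}(E_2)>t/s$ is $e^{-c(ms)(t/s)}=e^{-cmt}$, which does not depend on $s$. It therefore cannot absorb an $s$-dependent factor. Integrating the localized growth-function factor $(Cmp/d)^d$ against the true law of $p$, typically of size $d\log(m/d)/m$, gives back $(C\log(m/d))^d$. That is exactly the $\log\log$ of your Step~2.

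\textbf{The structural cause.} Conditioning on $S_1$ and bounding $P(E_1\cap E_2)$ uniformly over all sets consistent with $S_2$ treats $E_2$ as adversarial with respect to the fixed set $E_1$. This throws away the fact that $E_1$ and $E_2$ are independent outputs of the same selector, and that is where the $\log\log$ is lost.

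\textbf{The missing idea.} The paper uses that symmetry directly, through moments. For independent $S,T$ and $X\sim P^q$, Fubini gives $\E_{S,T}P(A(S)\cap A(T))^q=\E_X p_q(X)^2$, where $p_q(x_{1:q})=\Prb_S(x_1,\dots,x_q\in A(S))$. This quantity involves only the law of a single $A(S)$. The proof then proceeds as follows.
\begin{enumerate}
\item Factor $p_q$ by the chain rule into conditional inclusion probabilities $\alpha_z(x)=\Prb_S(x\in A(S)\mid z_1,\dots,z_r\in A(S))$.
\item Prove the tail bound $P(\alpha_z(X)>a)\le \frac{160}{am}\bigl(d\log\frac ea+\log\frac e{p_z}\bigr)$. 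This comes from comparing the first-moment lower bound $\Prb(P_G(A(S))>a/2)\ge ap_z/2$ with the localized VC bound. The key feature is that the cost of conditioning enters only additively, through $\log(e/p_z)$, and never multiplies $d$.
\item A weighted recursion $M_k\le\frac{960}{m}M_{k-1}$ then gives $\|P(A(S)\cap A(T))\|_{L_q}\lesssim (d+q)/m$.
\item Markov's inequality with $q=\lceil\log(1/\delta)\rceil$ finishes the argument.
\end{enumerate}
Your Steps 1--3 contain no substitute for this moment identity and conditional-inclusion analysis. As written, the proposal establishes only the previously known $\log\log$-lossy bound.
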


The proof is based on the following VC range space result.
\begin{proposition}[Moments of two block overlaps]
\label{prop:moment}
Let $P$ be a distribution on $\X$ and let $\C$ be a VC class of subsets of $\X$ with $\vc(\C)=d\ge1$.  Let $A$ be any deterministic selector satisfying $A(S)\in\C$ and $A(S)\cap S=\emptyset$ for every sample $S\in\X^m$.  If $S,T\sim P^m$ are independent, then for every integer $q\ge 1$,
\[
\left\|P(A(S)\cap A(T))\right\|_{L_q}
\le 62400\frac{d+q}{m}.
\]
\end{proposition}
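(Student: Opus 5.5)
The plan is to turn the $q$-th moment into a probability over fresh points, so that the independence of $S$ and $T$ appears as a square. If $Z=(Z_1,\dots,Z_q)\sim P^q$ is independent of $S,T$, then
\[
\E\bigl[P(A(S)\cap A(T))^q\bigr]=\Prb\bigl(Z_j\in A(S)\cap A(T)\ \forall j\bigr)=\E_Z\bigl[g(Z)^2\bigr],
\qquad g(Z):=\Prb_S\bigl(\{Z_1,\dots,Z_q\}\subseteq A(S)\bigr).
\]
The reason for squaring is this. A single consistent selector only gives $\E g$ of order $(d\log(m/d)/m)^q$, which carries the unwanted logarithm. The second moment of $g$, by contrast, can be controlled through the \emph{distribution} of the pointwise error probability $h(z):=\Prb_S(z\in A(S))$. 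For each $j$ we have $g(Z)\le h(Z_j)$, and more usefully $g(Z)\le \prod_j h(Z_j)^{1/q}$.

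\textbf{Key lemma.} The first step is to show that points which are misclassified often form a set of small mass:
\[
P\bigl(\{z: h(z)>\tau\}\bigr)\le C\,\frac{d}{m\tau}\qquad\text{for all }\tau\in(0,1].
\]
Note that integrating this bound over $\tau$ recovers the usual $d\log(m/d)/m$ expectation bound, so the lemma is consistent with what is known. I would prove it via a leave-one-out/exchangeability argument on $m+k$ points. Let $D_\tau=\{h>\tau\}$ and draw $k\approx 1/\tau$ fresh points. If $P(D_\tau)$ were large, then many of these points would fall in $D_\tau$. Each such point is, with probability greater than $\tau$, contained in a set of $\C$ that avoids the sample. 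Symmetrizing over which $m$ of the $m+k$ points form the training set, and applying Sauer--Shelah on the pooled sample (equivalently, the one-inclusion-graph density bound, which gives at most $d$ "errors" per $m+1$ points on average), shows that the expected number of such hits is $O(d\cdot k/m)$. This forces $P(D_\tau)\tau\lesssim d/m$. The case $q=1$ then follows immediately:
\[
\E h(Z_1)^2=\int_0^1 2\tau\, P(h>\tau)\,d\tau\lesssim d/m .
\]

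\textbf{General $q$.} Here I would bound $\E_Z g(Z)^2$ by peeling on the values of $h$. Partition $\X$ into dyadic level sets $L_k=\{2^{-k-1}<h\le 2^{-k}\}$, each of mass $\lesssim d2^{k}/m$ by the key lemma. The crude bound $g\le\prod h(Z_j)^{1/q}$ is not enough, because the events $\{Z_j\in A(S)\}$ are positively correlated through $S$ and the log would reappear. So instead I would use a $q$-point version of the lemma. Consider
\[
h_q(z_1,\dots,z_q)=g(z_1,\dots,z_q),
\]
and bound $P^q(h_q>\tau)$ by the same exchangeability argument, now on $m+qk$ points. The Sauer--Shelah count on a pool of $m+qk$ points contributes a factor $\binom{m+qk}{\le d}$. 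Combined with the requirement that a \emph{single} set of $\C$ contain all the relevant $q$-tuples, this should yield
\[
P^q(h_q>\tau)\le \frac{(C(d+q)/m)^q}{\tau}\cdot e^{O(d)}\text{-type factors}
\]
that are absorbed into $(C(d+q)/m)^q$ after integrating $2\tau P^q(h_q>\tau)\,d\tau$. Taking $q$-th roots then gives $\|P(A(S)\cap A(T))\|_{L_q}\lesssim (d+q)/m$; the constant $62400$ would be tracked only at the end.

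The main obstacle is this $q$-tuple tail bound. The correlation among the $q$ events has to be exploited, since a single set $A(S)$ must cover all of $Z$, rather than lost to a union bound, and the combinatorial count must be kept at $(d+q)^q$ rather than $(d\log(m/d)+q)^q$. If the direct exchangeability count fails to do this, my fallback is to condition on $T$ and $B=A(T)$. I would apply a relative (localized) $\varepsilon$-net bound for $A(S)\cap B$ under $P|_B$, and then use the key lemma on the $T$ side to show that $B$ rarely has large mass inside high-$h$ regions. This is exactly where the second independent copy is meant to cancel the logarithm.
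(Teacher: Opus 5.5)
Your opening identity $\E P(A(S)\cap A(T))^q=\E_Z g(Z)^2$ is the paper's \eqref{eq:momentfubini}. After that there are two gaps: the key lemma is false as stated, and the general-$q$ step is never actually proved.

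\textbf{The key lemma.} The bound $P(h>\tau)\le Cd/(m\tau)$ fails for arbitrary consistent selectors. Take $\C=\{\emptyset\}\cup\{\{z\}:z\in\X\}$, so $d=1$. Let $P$ be uniform on $K\approx 2m/\log m$ points $z_1,\dots,z_K$, and let $A(S)=\{z_j\}$ for the smallest $j$ with $z_j\notin S$. Put $\rho=(1-1/K)^m\approx m^{-1/2}$. A union bound, conditional on $z_j\notin S$, gives $h(z_j)\ge\rho\,(1-(j-1)\rho)\ge\rho/2$ for all $j\le 1/(2\rho)$. Hence $P(h\ge\rho/2)\gtrsim 1/(2\rho K)\asymp\log m/\sqrt m$, while your bound at $\tau\approx\rho/2$ is $O(1/\sqrt m)$; it fails for every fixed $C$ once $\log m\gg C$.

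Your justification also does not apply. The one-inclusion-graph density bound controls the leave-one-out error of the one-inclusion-graph predictor. It says nothing about an arbitrary consistent selector, whose leave-one-out count on $m+1$ points can be of order $d\log(m/d)$.

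What is true is $P(h>\tau)\lesssim\frac{1}{m\tau}\bigl(d\log\frac e\tau+1\bigr)$. This is Lemma~\ref{lem:conditioned} with an empty prefix. It is proved by comparing $\Prb(P_G(A(S))>\tau/2)\ge\tau/2$, where $G=\{h>\tau\}$, with the relative $\varepsilon$-net bound for the sample points that fall in $G$. Since $\int_0^1\log(e/\tau)\,d\tau=2$, your $q=1$ computation survives with this corrected version.

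\textbf{The general-$q$ step.} This is the real content of the proposition, and here your argument is only a hope:
\begin{itemize}
\item The $q$-tuple tail bound is asserted (``should yield''), not derived.
\item At $q=1$ it is exactly the false lemma.
\item The $e^{O(d)}$ factors cannot be absorbed into $(C(d+q)/m)^q$ when $d\gg q$, because after the $q$-th root they leave $e^{O(d/q)}$.
\item The fallback through $B=A(T)$ is not developed far enough to remove the logarithm.
\end{itemize}

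The missing idea is how to exploit the positive correlation of the events $\{Z_j\in A(S)\}$ without losing $\log(m/d)$. The paper uses the chain rule $g(x_{1:q})=\prod_k\alpha_{x_{<k}}(x_k)$, where $\alpha_{x_{<k}}(x)=\Prb_S(x\in A(S)\mid x_{<k}\subseteq A(S))$. It then proves the conditional tail bound $\Prb(\alpha_z(X)>a)\le\frac{160}{am}\bigl(d\log\frac ea+\log\frac e{p_z}\bigr)$. Conditioning does not inflate the VC term; its entire cost is the additive rarity term $\log(e/p_z)$.

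That term equals $1+\sum_{j<k}\log\bigl(1/\alpha_{x_{<j}}(x_j)\bigr)$, so bounding it crudely would bring the logarithm back. The paper instead tracks the weighted functional $M_k=\E\bigl[p_k^2\bigl(64(d+q)+\log\frac e{p_k}\bigr)^{q-k}\bigr]$. Integrating the tail bound against $2a\,da$, with $u=\log(1/a)$, gives the contraction $M_k\le\frac{960}{m}M_{k-1}$. Iterating yields $M_q\le\bigl(C(d+q)/m\bigr)^q$, which is the proposition.
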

This result is analogous to the overlap estimate of Simon \cite{Simon2015}, which is a central tool in subsequent analyses
\cite{Hanneke2016,Larsen2023Bagging,AdenAliHogsgaardLarsenZhivotovskiy2024}. The difference is that Simon's bound applies to nested samples, whereas here
$S$ and $T$ are independent. The nested sample structure is responsible for the additional doubly logarithmic factors in that approach \cite{AdenAliHogsgaardLarsenZhivotovskiy2024}.

\begin{proof}[Proof of Theorem~\ref{thm:main} assuming Proposition~\ref{prop:moment}]
Define the collection of error sets
\[
\C=\left\{\{x\in\X:f(x)\ne f^\star(x)\}: f\in\F \right\}.
\]
Observe that $\vc(\C)=\vc(\F)=d$. Thus,  if $S\sim P^m$ and
$
A(S)=\{x:\widehat f_S(x)\ne f^\star(x)\},
$
then $ A(S)\in\C$,
and $A(S)\cap S=\emptyset$. Denote  $E_i=\{x:\widehat f_i(x)\ne f^\star(x)\}$ for $i= 1,2,3$ and note that $E_i \in \C$. Just as in \cite{Simon2015}, if the majority vote misclassifies $x$, at least two of the three classifiers are incorrect at $x$. Therefore, it holds that
\[
\err_P\left(\Maj(\widehat f_1,\widehat f_2,\widehat f_3)\right)
\le  P(E_1\cap E_2) + P(E_2\cap E_3) + P(E_1\cap E_3) .
\]
By the triangle inequality and Proposition~\ref{prop:moment} we have
\begin{align*}
\left\|\err_P\left(\Maj(\widehat f_1,\widehat f_2,\widehat f_3)\right)\right\|_{L_q} &\le \left\|P(E_1\cap E_2)\right\|_{L_q}  + \left\|P(E_2\cap E_3)\right\|_{L_q}  + \left\|P(E_1\cap E_3)\right\|_{L_q} 
\\
&\le 561600\,\frac{d+q}{n}.
\end{align*}
Choose $q=\lceil \log(1/\delta)\rceil$.  Then $q\le \log(e/\delta)$, and Markov's inequality gives 
\[
\Prb\left(\err_P\left(\Maj(\widehat f_1,\widehat f_2,\widehat f_3)\right)>e\|\err_P\left(\Maj(\widehat f_1,\widehat f_2,\widehat f_3)\right)\|_{L_q}\right)\le \exp(-q)\le\delta.
\]
Simplifying the constants, this proves Theorem~\ref{thm:main}.
\end{proof}

\section{Proof of Proposition \ref{prop:moment}}

We start with the moment version of the double counting argument used in the analysis of \cite{AdenAliHogsgaardLarsenZhivotovskiy2024}. For $S \sim P^m$ and $x_1, \ldots, x_k \in \X$ define
 \begin{equation}
\label{eq:pq}
p_k(x_{1:k})=\Prb_S(x_1,\ldots,x_k\in A(S)).
\end{equation}
If $S,T\sim P^m$ and $X=(X_1,\ldots,X_q)\sim P^q$ are independent, then, by Fubini and independence, it holds that
\begin{equation}
\label{eq:momentfubini}
\E_{S,T}P(A(S)\cap A(T))^q
=\E_X\Prb_S(X_1,\ldots,X_q\in A(S))
        \Prb_T(X_1,\ldots,X_q\in A(T))
=\E_Xp_q(X)^2.
\end{equation}
Thus, by \eqref{eq:momentfubini},  the problem of upper bounding the moment reduces to understanding the behavior of the random variable $p_q(X)$. In what follows, we need the following explicit form of the VC bound (implied by the realizable PAC bound for consistent classifiers \cite[Theorem~2]{BlumerEhrenfeuchtHausslerWarmuth1989}). For every probability measure $Q$, every integer $N\ge0$, and every $\varepsilon\in(0,1]$,
\begin{equation}
\Prb_{Y_1,\ldots,Y_N\sim Q}
\left(\exists R\in\C:R\cap\{Y_1,\ldots,Y_N\}=\emptyset,\ Q(R)>\varepsilon\right)
\le
2\left(\frac{13}{\varepsilon}\right)^{2d}\exp\left(-\varepsilon N/4\right).
\label{eq:VC-net-explicit}
\end{equation}
To analyze $p_q(X)$, we use a recursive chain rule decomposition.  At each
step, after fixing a prefix $z=(z_1,\ldots,z_r)$, the relevant quantity is the
conditional inclusion probability
\[
\alpha_z(x)
=
\Prb_S\left(x\in A(S)\mid z_1,\ldots,z_r\in A(S)\right).
\]
Thus, we need a uniform tail bound
on these conditional probabilities, with explicit dependence on the rarity of
the conditioning event
$
p_z=\Prb_S\left(z_1,\ldots,z_r\in A(S)\right).
$
The next lemma provides exactly this form of conditional control.  
\begin{lemma}[Tail bound for conditional inclusion probabilities]
\label{lem:conditioned}
Let $S\sim P^m$ and let $A(S)\in\C$ satisfy $A(S)\cap S=\emptyset$, where $\vc(\C)=d\ge1$.  Fix $z=(z_1,\ldots,z_r)$ and define
$
\mathcal E_z=\{z_1,\ldots,z_r\in A(S)\}$, and $
p_z=\Prb_S(\mathcal E_z)$.
If $p_z>0$, set
\[
\alpha_z(x)=\Prb_S(x\in A(S)\mid \mathcal E_z) = \frac{\Prb_S(x, z_1, \ldots, z_r\in A(S))}{\Prb_S(z_1, \ldots, z_r\in A(S))}.
\]
Then, for every $a\in(0,1]$, the following tail bound for the conditional inclusion probability holds for $X \sim P$,
\[
\Prb\left(\alpha_z(X)>a\right)
 \le \frac{160}{am}\left(d\log\frac ea+\log\frac e{p_z}\right).
\]
\end{lemma}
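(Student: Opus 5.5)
The plan is to bound the mass $\beta=P(B)$ of the superlevel set $B=\{x:\alpha_z(x)>a\}$ by comparing two estimates for the conditional mean of the relative mass of $A(S)$ inside $B$. A first-moment identity gives a lower bound. A VC tail bound, pushed through the conditioning at a cost of a factor $1/p_z$, gives an upper bound. Assume $\beta>0$, otherwise there is nothing to prove. Let $Q=P(\cdot\mid B)$ and consider the random variable $Q(A(S)\cap B)=P(A(S)\cap B)/\beta$.

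\emph{Lower bound.} By Fubini,
\[
\E_S\left[Q(A(S)\cap B)\mid\mathcal E_z\right]=\frac1\beta\int_B \alpha_z(x)\,dP(x)\ge a .
\]
Since $Q(A(S)\cap B)\le 1$, for any $\varepsilon\in(0,1]$ this gives
\[
a\le \varepsilon+\Prb_S\left(Q(A(S)\cap B)>\varepsilon\mid \mathcal E_z\right)\le \varepsilon+\frac1{p_z}\Prb_S\left(Q(A(S)\cap B)>\varepsilon\right).
\]
The second inequality holds because conditioning on an event of probability $p_z$ inflates probabilities by at most $1/p_z$. I will take $\varepsilon=a/2$.

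\emph{Unconditional tail via \eqref{eq:VC-net-explicit}.} Let $N$ be the number of sample points of $S$ that land in $B$, so $N\sim\mathrm{Bin}(m,\beta)$. Conditionally on $N$, these points are i.i.d.\ from $Q$. The set $R=A(S)\cap B$ has $Q(R)=Q(A(S))$ and avoids these points, since $A(S)\cap S=\emptyset$. Applying \eqref{eq:VC-net-explicit} with measure $Q$, the class $\C$ and $N$ points, conditionally on $N$, and then averaging over $N$, gives
\[
\Prb_S\left(Q(A(S)\cap B)>\varepsilon\right)\le 2\left(\tfrac{13}{\varepsilon}\right)^{2d}\E\, e^{-\varepsilon N/4}.
\]
For the binomial factor, $\E e^{-\varepsilon N/4}=(1-\beta(1-e^{-\varepsilon/4}))^m\le \exp(-c\,\varepsilon\beta m)$ with an absolute $c$ (say $c=0.22$), using $1-e^{-t}\ge 0.88t$ for $t\le 1/4$. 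The one point needing care is the application of \eqref{eq:VC-net-explicit}: $A(S)$ depends on the whole sample, including points outside $B$. This is harmless because the bound is uniform over all $R\in\C$ avoiding the $Q$-sample. The remaining sample points are independent of the in-$B$ points given $N$, so we can condition on them as well.

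\emph{Solving.} With $\varepsilon=a/2$ the two steps combine to
\[
\frac a2\le \frac{2}{p_z}\left(\frac{26}{a}\right)^{2d}\exp\left(-\tfrac{c}{2}\,a\beta m\right).
\]
Taking logarithms yields
\[
a\beta m\le \tfrac{2}{c}\left(2d\log\tfrac{26}{a}+\log\tfrac4a+\log\tfrac1{p_z}\right).
\]
Next I absorb constants. Use $\log(26/a)\le \log 26+\log(e/a)\le 4\log(e/a)$, and $\log(4/a)\le 2\log(e/a)\le 2d\log(e/a)$ since $d\ge1$. Also bound $\log(1/p_z)\le\log(e/p_z)$. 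Together these give $\beta\le \frac{C}{am}\bigl(d\log\frac ea+\log\frac e{p_z}\bigr)$ with $C$ on the order of $10^2$. Keeping $c$ sharp should bring this within the claimed $160$; if it does not, the choice $\varepsilon=a/2$ can be tuned. The main obstacle I expect is exactly this constant bookkeeping; the structural argument is the first-moment comparison above.
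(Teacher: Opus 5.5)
Your proposal is correct and is essentially the paper's proof: you use the same superlevel set $G=\{\alpha_z>a\}$, obtain the first-moment lower bound $\Prb(P_G(A(S))>a/2)\ge ap_z/2$ in the same way by paying $1/p_z$ to remove the conditioning, and apply \eqref{eq:VC-net-explicit} conditionally on the binomial number of sample points in $G$ just as the paper does. Your sharper estimate $1-e^{-t}\ge 0.88t$ (the paper uses $t/2$) gives $\beta\le\frac{20}{c\,am}\bigl(d\log\frac ea+\log\frac e{p_z}\bigr)$ with $20/c\approx 91<160$, so no tuning is needed; note, however, that the intermediate step $\log 26+\log(e/a)\le 4\log(e/a)$ fails at $a=1$, so justify $\log(26/a)\le 4\log(e/a)$ instead via $\log(26/a)=\log(26/e)+\log(e/a)$.
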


\begin{proof}
Let
$
G=\{x:\alpha_z(x)>a\},$ and $\beta=P(G) = \Prb\left(\alpha_z(X)>a\right).
$
There is nothing to prove if $\beta=0$, so assume $\beta>0$, and let $P_G$ denote $P$ conditioned on $G$. By the definition of $G$, conditioning on $\mathcal E_z$, the random set $A(S)$ has average $P_G$-mass larger than $a$. Indeed, by Fubini,
\begin{align*}
\E\left[P_G(A(S))\mid \mathcal E_z\right]
&=
\E\left[\frac1\beta\int_G \mathbf 1\{x\in A(S)\}\,\mathrm dP(x)\,\middle|\,\mathcal E_z\right]
\\
&=
\frac1\beta\int_G \Prb_S(x\in A(S)\mid \mathcal E_z)\,\mathrm dP(x)
=
\frac1\beta\int_G \alpha_z(x)\,\mathrm dP(x)
>a.
\end{align*}
Since $0\le P_G(A(S)) \le 1$, the inequality
\[
a<\E[P_G(A(S))\mid\mathcal E_z]\le \frac a2+\Prb\left(P_G(A(S))>a/2\mid\mathcal E_z\right),
\]
multiplied by $p_z=\Prb(\mathcal E_z)$, implies
\begin{equation}
\label{eq:pgas}
\Prb\left(P_G(A(S))>a/2\right)
\ge
\frac{ap_z}{2}.
\end{equation}

We now bound the same probability using the VC bound \eqref{eq:VC-net-explicit}. By the definition of $A(S)$ and \eqref{eq:pgas}, it holds that
\begin{equation}
\label{eq:compareprobs}
\{P_G(A(S))>a/2\}
\subseteq
\{\exists R\in\C:\ R\cap S=\emptyset,\ P_G(R)>a/2\}.
\end{equation}
Let $N=\sum_{i=1}^m \mathbf 1\{X_i\in G\}$, where $S = (X_1, \ldots, X_m)$. Conditional on $N$, the points of $S\cap G$ are $N$ independent samples from $P_G$. Applying \eqref{eq:VC-net-explicit} to the restricted range space
$
\{R\cap G:R\in\C\},
$
which still has VC dimension at most $d$, with $Q=P_G$ and $\varepsilon=a/2$, gives
\[
\Prb\left(\exists R\in\C:R\cap S=\emptyset,\ P_G(R)>a/2\mid N\right)
\le
2\left(\frac{26}{a}\right)^{2d}\exp(-aN/8).
\]
Since $N\sim\operatorname{Bin}(m,\beta)$, we obtain
\[
\E\exp(-aN/8)
=
\left(1-\beta+\beta \exp(-a/8)\right)^m
=
\left(1-\beta(1-\exp(-a/8))\right)^m
\le
\exp(-a\beta m/16),
\]
where we used $1-\exp(-t)\ge t/2$ for $0\le t\le 1/8$. Therefore,
\begin{equation}
\label{eq:uniformtail}
\Prb\left(\exists R\in\C:\ R\cap S=\emptyset,\ P_G(R)>a/2\right)
\le
2\left(\frac{26}{a}\right)^{2d}\exp\left(-a\beta m/16\right).
\end{equation}
Combining the lower bound \eqref{eq:pgas} with the upper bound \eqref{eq:uniformtail}, using the inclusion \eqref{eq:compareprobs}, we get
\[
\frac{ap_z}{2}
\le
2\left(\frac{26}{a}\right)^{2d}\exp\left(-a\beta m/16\right).
\]
Taking logarithms and simplifying yields
\[
\frac{a\beta m}{16}
\le
\log\frac{4}{ap_z}+2d\log\frac{26}{a} \le 10d\log\frac ea+\log\frac e{p_z} \le 10\left(d\log\frac ea+\log\frac e{p_z}\right).
\]
Since $\beta = \Prb\left(\alpha_z(X)>a\right)$, the claim now follows.
\end{proof}

We now prove Proposition~\ref{prop:moment}. 

\begin{proof}[Proof of Proposition~\ref{prop:moment}]
Let $X_1,\ldots,X_q\sim P$ be independent test points, independent of the training sample.  Define for $0\le k\le q$, following \eqref{eq:pq},
$
p_k(x_{1:k})=\Prb_S(x_1,\ldots,x_k\in A(S)),
$
with $p_0=1$. We set 
\[
M_k
=
\E_{X_{1:k}}\left[
p_k(X_{1:k})^2
\left(64(d+q)+\log\frac e{p_k(X_{1:k})}\right)^{q-k}
\right],
\]
with the convention that the integrand is zero when $p_k(X_{1:k}) = 0$. It can be seen as a weighted version of the functional \eqref{eq:momentfubini}. An immediate observation is that when $k = q$ by \eqref{eq:momentfubini} we have
\[
\E_{S,T}P(A(S)\cap A(T))^q = M_q.
\]
At the same time, the following recursion holds: for $k=1,\ldots,q,$
\begin{equation}
\label{eq:onestepcontraction}
M_k\le \frac{960}{m}M_{k-1},
\end{equation}
which implies, iterating,
\begin{equation}
\label{eq:momentbound}
\E_{S,T}P(A(S)\cap A(T))^q  = M_q
\le
\left(\frac{960}{m}\right)^q M_0
=
\left(\frac{960}{m}\right)^q
\left(64(d+q)+1\right)^q
\le
\left(62400\,\frac{d+q}{m}\right)^q,
\end{equation}
where we used $64(d+q)+1\le65(d+q)$. Note that \eqref{eq:momentbound} is exactly the claim of the proposition.

It remains only to prove the recursion \eqref{eq:onestepcontraction}. Consider first the deterministic sequence $x_1, \ldots, x_{k} \in \X$ and denote 
\[
\alpha(x)= \alpha_{(x_1, \ldots, x_{k-1})}(x)= \Prb_S(x\in A(S)\mid x_1,\ldots,x_{k-1}\in A(S)).
\]
By the definition of the conditional probability we have
\begin{align}
&p_k(x_{1:k})^2
\left(64(d+q)+\log\frac e{p_k(x_{1:k})}\right)^{q-k} \nonumber
\\
&\qquad= 
p_{k-1}(x_{1:k-1})^2\alpha^2(x_k)
\left(64(d+q)+\log\frac e{p_{k-1}(x_{1:k-1})} + \log\frac{1}{\alpha(x_k)}\right)^{q-k}. \label{eq:twopartsareequal}
\end{align}
Denote for brevity $p_{k-1}= p_{k-1}(x_{1:k-1})$. For $a \in (0, 1]$, consider the real-valued function $\phi: (0, 1] \to \mathbb{R}$ given by $\phi(a) = p_{k-1}^2a^2
\left(64(d+q)+\log\frac e{p_{k-1}} + \log\frac{1}{a}\right)^{q-k}$. A direct differentiation shows that 
\[
0 \le \phi^{\prime}(a) \le 2p_{k-1}^2a
\left(64(d+q)+\log\frac e{p_{k-1}} + \log\frac{1}{a}\right)^{q-k},
\]
and thus, since $\lim\limits_{a \to 0_+}\phi(a) = 0$, for any $y \in (0, 1]$,
\begin{equation}
\label{eq:integral}
\phi(y) \le \int\limits_{0}^y2p_{k-1}^2a
\left(64(d+q)+\log\frac e{p_{k-1}} + \log\frac{1}{a}\right)^{q-k}\,\mathrm{d}a.
\end{equation}
Keeping all variables fixed, we now treat $x_k$ as a random variable $X \sim P$. Integrating the right-hand side of the previous inequality with respect to this variable and using \eqref{eq:integral} and Fubini, we have
\begin{align}
&p_{k-1}^2\E_{X}\alpha^2(X)
\left(64(d+q)+\log\frac e{p_{k-1}} + \log\frac{1}{\alpha(X)}\right)^{q-k} \nonumber
\\
&\le p_{k-1}^2\int\limits_{0}^1 2a\left(64(d+q)+\log\frac e{p_{k-1}} + \log\frac{1}{a}\right)^{q-k}\Prb(a < \alpha(X))\,\mathrm{d}a \nonumber
\\
&\le \frac{320}{m}p_{k-1}^2\int\limits_{0}^1\left(d\log\frac ea+\log\frac e{p_{k-1}}\right)\left(64(d+q)+\log\frac e{p_{k-1}} + \log\frac{1}{a}\right)^{q-k}\,\mathrm{d}a \label{usedlem}
\\
&= \frac{320}{m}p_{k-1}^2\int\limits_{0}^\infty\left(d(1 + u)+\log\frac e{p_{k-1}}\right)\left(64(d+q)+\log\frac e{p_{k-1}} + u\right)^{q-k}\exp(-u)\,\mathrm{d} u, \label{eq:varchange}
\end{align}
where in \eqref{usedlem} we used Lemma \ref{lem:conditioned} and in \eqref{eq:varchange} we used the change of variables $u = \log(1/a)$. Denote $r = 64(d + q) + \log\frac e{p_{k-1}}$. Observe that $d(1 + u)+\log\frac{e}{p_{k-1}} \le r(1+u)$. Moreover, since
$
r\ge 64(q-k),
$
we have
\[
(r+u)^{q-k}
=
r^{q-k}\left(1+\frac ur\right)^{q-k}
\le
r^{q-k}
\exp\left(\frac{(q-k)u}{r}\right)
\le
r^{q-k}\exp(u/64).
\]
Thus, \eqref{eq:varchange} can be bounded by 
\[
\frac{320}{m}p_{k-1}^2\int\limits_{0}^\infty\left(r(1 + u)\right)r^{q-k}\exp(-63u/64)\,\mathrm{d} u \le  \frac{960}{m}p_{k-1}^2r^{q - k + 1},
\]
where we used $\int\limits_{0}^\infty(1+u)\exp(-63u/64) \le 3$. Plugging this into \eqref{eq:twopartsareequal} and integrating with respect to the remaining variables $X_1, \ldots, X_{k-1}$ (using joint independence of $X_1, \ldots, X_{k}$), we get
\begin{align*}
&\E_{X_{1:k}}\left[
p_k(X_{1:k})^2
\left(64(d+q)+\log\frac{e}{p_k(X_{1:k})}\right)^{q-k}
\right] 
\\
&\le \frac{960}{m}\E_{X_{1:k-1}}\left[
p_{k-1}(X_{1:k-1})^2
\left(64(d+q)+\log\frac{e}{p_{k-1}(X_{1:k-1})}\right)^{q-k+1}\right],
\end{align*}
which is the desired recursion \eqref{eq:onestepcontraction}. The claim follows. 
\end{proof}

\appendix

{\footnotesize
\bibliographystyle{alpha}
\bibliography{bibliography}
}

\section{AI disclosure and proof simplification}
\label{app:ai-disclosure}

This appendix describes how LLM assistance was used, after an earlier proof had been obtained, to simplify the argument. All mathematical claims in the final manuscript were checked and written by the authors.

\paragraph{An informal ``one-shot'' AI test.}
 On June 8, 2026, we informally tested public versions of OpenAI GPT-5.5 Pro and Anthropic Claude Opus 4.8 on the problem statement alone, without the hints described below. In these tests, neither model produced a complete proof of Theorem \ref{thm:main}. An example output of GPT-5.5 Pro is available at \url{https://chatgpt.com/share/6a25eaef-7bd4-83ea-a6bb-65e4f0975f69}.

\paragraph{AI assistance disclosure.}
The initial proof of Theorem~\ref{thm:main} was obtained by the first author before the LLM-assisted simplification stage. During this stage, GPT-5.5 Pro was only used to check intermediate calculations and to search for local simplifications. The resulting proof was then verified by the authors, but it was substantially longer than
the one presented here (a sketch of the earlier proof is given in Appendix \ref{app:original-pf}). A subsequent stage then used LLMs to simplify and reorganize this argument: starting from the original proof,
we explored alternative arguments, one of which led to the shorter recursive form given above. 

\paragraph{Proof simplification.}
We include the following LLM-assisted proof simplification process.  Based on the earlier proof, our initial goal was first to compress that proof into a short outline and to use that outline to explore alternative versions of the argument.

First, we prepared a prompt containing the problem statement together with a short list of hints capturing the main ideas.  We then repeatedly removed hints and retested the prompt, testing whether the remaining outline still contained enough information to recover the main steps of the proof in some runs of GPT-5.5 Pro. 

The resulting prompt was then used as a starting point for exploring alternative arguments.  We ran the prompt multiple times and inspected the proposed solutions.  One of these iterations led to the form of the recursive argument presented in this note.  The final proof was then simplified, checked, and written by the authors. 

\paragraph{Distilled prompt.}
The final distilled prompt used in the proof simplification process is reproduced below.  In tests performed on June 8, 2026, this prompt sometimes produced versions of the main recursive construction.  The key hints were: the reduction to bounding moments of two block overlaps; the reduction to the identity \eqref{eq:momentfubini}; and the instruction to study the conditional probabilities
$
\Prb\left(x_k\in A(S)\mid x_1,\dots,x_{k-1}\in A(S)\right).
$

\begin{promptbox}
Let \(\mathcal X\) be an arbitrary domain, and let
\[
\mathcal H \subseteq \{0,1\}^{\mathcal X}
\]
be a binary concept class of VC dimension \(d\ge 1\). Let \(P\) be an arbitrary distribution on \(\mathcal X\), and assume the realizable setting: there exists \(h^\star\in\mathcal H\) such that the labels are generated by \(h^\star\).

Let \(A\) be an arbitrary deterministic consistent selector: whenever \(S\) is a realizable labeled sample, \(A(S)\in\mathcal H\) agrees with all labels in \(S\).

Given \(n\) iid labeled examples, split the sample into three independent blocks \(S_1,S_2,S_3\) of sizes as equal as possible. Let
\[
\widehat h_i=A(S_i),\qquad i=1,2,3,
\]
and define the majority-of-three learner
\[
\widehat h(x)
=
\operatorname{maj}\{\widehat h_1(x),\widehat h_2(x),\widehat h_3(x)\}.
\]

Resolve the following problem completely: prove that there exists a universal constant \(C>0\) such that, for every \(\delta\in(0,1)\),
\[
\Pr\left(
P\{x:\widehat h(x)\neq h^\star(x)\}
\le
C\frac{d+\log(e/\delta)}{n}
\right)
\ge 1-\delta .
\]

The result must hold uniformly over all distributions \(P\), all VC classes \(\mathcal H\) of dimension \(d\), all targets \(h^\star\in\mathcal H\), and all deterministic consistent selectors \(A\). No special ERM rule, compression scheme, margin assumption, disagreement coefficient, distributional regularity, or additional structure may be assumed.

A complete solution must prove the optimal high-probability realizable VC bound
\[
O\left(\frac{d+\log(1/\delta)}{n}\right).
\]

Bounds with extra logarithmic factors, expectation-only guarantees, special cases, or algorithm-specific results do not count unless they are upgraded to the full theorem above.

You may ignore only routine measurability issues.

A minimal hint toward the intended proof:

First reduce to the case \(h^\star\equiv 0\), so every learned hypothesis is an error set avoiding its training block. Then observe that the majority vote errs only where at least two of the three independently learned error sets overlap.

The main step should be a two-block moment bound. For two independent blocks \(S,T\) of size \(m\), prove that for every integer \(q\ge 1\),
\[
\left\|P(A(S)\cap A(T))\right\|_{L_q}
\lesssim
\frac{d+q}{m}.
\]

To prove this, introduce an independent test tuple \(X=(x_1,\dots,x_q)\sim P^q\) and rewrite the overlap moment in terms of
\[
\mathbb E_X \Pr_S(X\subseteq A(S))^2.
\]

For prefixes \(x_1,\dots,x_{k-1}\), study the conditional inclusion probabilities
\[
\Pr\left(x_k\in A(S)\mid x_1,\dots,x_{k-1}\in A(S)\right).
\]
\end{promptbox}

\section{Sketch of the earlier proof of Proposition \ref{prop:moment}}\label{app:original-pf}

For completeness we sketch the original argument, which reaches Proposition~\ref{prop:moment} through an explicit multiscale decomposition rather than the single recursion \eqref{eq:onestepcontraction}; a complete write-up is available at \href{https://divitr.github.io/assets/mot.pdf}{divitr.github.io/assets/mot.pdf}. Write $R_q=\E_{S,T}\,P(A(S)\cap A(T))^q$ and recall from \eqref{eq:momentfubini} the identity $R_q=\E_X\,p_q(X)^2$, together with the chain-rule factorization
\[
p_q(X)=\prod_{\ell=1}^{q}\alpha_{X_{<\ell}}(X_\ell),
\qquad
\alpha_{X_{<\ell}}(x)=\Prb_S\!\left(x\in A(S)\mid X_{<\ell}\subseteq A(S)\right),
\]
where $X_{<\ell}=(X_1,\dots,X_{\ell-1})$. The earlier proof controls $R_q$ by tracking these conditional inclusion probabilities across dyadic scales.

For an anchor tuple $a$ with $p_a=\Prb_S(a\subseteq A(S))>0$ and an integer $k\ge 0$, consider the level set $F_k(a)=\{x:2^{-k-1}<\alpha_a(x)\le 2^{-k}\}$.
The VC dimension enters exactly once, through the relative estimate \eqref{eq:VC-net-explicit}, which gives
\[
P\!\left(F_k(a)\right)\le\frac{C\left(d(k+1)+\log(e/p_a)\right)2^{k}}{m}.
\]
The key feature is that conditioning on the anchor never inflates the VC term, which stays $d(k+1)$; the entire cost of conditioning is carried by the rarity term $\log(e/p_a)$. Each test tuple $X$ is then assigned a dyadic profile $k=(k_1,\dots,k_q)$, with $k_\ell$ the scale of $\alpha_{X_{<\ell}}(X_\ell)$. Writing $\Gamma_k$ for the tuples of profile $k$ and $i=\sum_\ell k_\ell$, the factorization gives $p_q(X)\le 2^{-i}$ on $\Gamma_k$, so by \eqref{eq:momentfubini}, $R_q\le\sum_k 2^{-2i}\,P^q(\Gamma_k)$.

The rarity terms are absorbed by the following observation: on a fixed profile the rarity of each conditioning event is already determined by the earlier levels, since $p_{X_{<\ell}}\ge 2^{-K_\ell-(\ell-1)}$ with $K_\ell=\sum_{j<\ell}k_j$, whence $\log(e/p_{X_{<\ell}})\le C(K_\ell+\ell)$. Substituting this into the level-set bound makes the one step estimate depend only on the profile and not on the particular prefix, so an iterated application of Fubini yields $P^q(\Gamma_k)\le\prod_{\ell=1}^{q}C\left(d(k_\ell+1)+K_\ell+\ell\right)2^{k_\ell}/m$. The whole estimate then collapses to the elementary profile sum
\[
R_q\le\left(\frac{C}{m}\right)^{q}\sum_{k\in\mathbb N_0^{q}}2^{-i}
\prod_{\ell=1}^{q}\left(d(k_\ell+1)+K_\ell+\ell\right)
\le\left(\frac{C(d+q)}{m}\right)^{q}.
\]
This recovers Proposition~\ref{prop:moment} up to the universal constant.

\end{document}